\newtheorem{theorem}{Theorem}
\newtheorem{assumption}[theorem]{Assumption}
\newtheorem{proposition}[theorem]{Proposition}
\newcommand{\E}{\mathbb{E}}
\newcommand{\R}{\mathbb{R}}
\newcommand{\Dth}{\frac{\partial}{\partial \theta_j}}
\newcommand{\Dt}[1]{\frac{\partial {#1}}{\partial \theta_j}}
\begin{document}
%
\title{Variance Adjusted Actor Critic Algorithms}
%
%
%

\author{Aviv~Tamar,
        Shie~Mannor
\thanks{A. Tamar and S. Mannor are with the Department
of Electrical Engineering, Technion -- Israel Institute of Technology, Haifa,
Israel, 32000.}
}

\markboth{}%
{Shell \MakeLowercase{\textit{et al.}}: Variance Adjusted Actor Critic Algorithms}

\maketitle

\begin{abstract}
We present an actor-critic framework for MDPs where the objective is the variance-adjusted expected return. Our critic uses linear function approximation, and we extend the concept of compatible features to the variance-adjusted setting. We present an episodic actor-critic algorithm and show that it converges almost surely to a locally optimal point of the objective function.
\end{abstract}

\begin{IEEEkeywords}
Reinforcement Learning, Risk, Markov Decision Processes.
\end{IEEEkeywords}

\IEEEpeerreviewmaketitle

\section{Introduction}
%
%
%
%
\IEEEPARstart{I}{n} Reinforcement Learning (RL; \cite{BT96}) and planning in Markov Decision Processes (MDPs; \cite{Puterman1994}), the typical objective is to maximize the cumulative (possibly discounted) expected reward, denoted by $J$. When the model's parameters are known, several well-established and efficient optimization algorithms are known. When the model parameters are not known, learning is needed and there are several algorithmic frameworks that solve the learning problem effectively, at least when the model is finite. Among these, actor-critic methods \cite{Konda2002actor} are known to be particularly efficient.

In typical actor-critic algorithms, the critic maintains an estimate of the value function -- the expected reward-to-go. This function is then used by the actor to estimate the gradient of the objective with respect to some policy parameters, and then improve the policy by modifying the parameters in the direction of the gradient. The theory that underlies actor-critic algorithms is the policy gradient theorem \cite{sutton_policy_2000}, which relates the value function with the policy gradient. In practice, for the expected-return objective, actor-critic algorithms have been used successfully in many domains \cite{peters2008natural},\cite{grondman2012survey}.

In many applications such as finance and process control, however, the decision maker is also interested in minimizing some form of \emph{risk} of the policy.
By risk, we mean reward criteria that take into account not only the expected reward, but also some additional statistics of the total reward, such as its variance, denoted by $V$. In this work, we specifically consider a variance-adjusted objective of the form $J - \mu V$, where $\mu$ is a parameter that controls the penalty on the variance.

Recently, several studies have considered RL with such an objective. In \cite{Tamar2012mean_var} a policy gradient (actor only) approach was proposed. Actor-critic algorithms are known to improve over actor-only methods by reducing variance in the gradient estimate, thus motivating the extension of the work in \cite{Tamar2012mean_var} to an actor-critic framework.
In \cite{sato2001td} a variance-penalized actor-critic framework was proposed, but without function approximation for the critic. Function approximation is essential for dealing with large state spaces, as required by any real-world application, and introduces significant algorithmic and theoretical challenges. In this work we address these challenges, and extend the work in \cite{sato2001td} to use linear function approximation for the critic.
In \cite{prasanth2013actor} an actor-critic algorithm that uses function approximation was proposed for the variance-penalized objective. In this algorithm, however, the actor uses simultaneous perturbation methods \cite{spall1992multivariate} to estimate the policy gradient. One drawback of this approach is that convergence can only be guaranteed to locally optimal points of a \emph{modified} objective function, which takes into account the error induced by function approximation. This error depends on the choice of features, and in general, there is no guarantee that it will be small. Another drawback of the method in \cite{spall1992multivariate} is that two trajectories are needed to estimate the policy gradient. In this work, we avoid both of these drawbacks. By extending the policy gradient theorem and the concept of \emph{compatible features} \cite{sutton_policy_2000}, we are able to guarantee convergence to a local optima of the \emph{true} objective function, and require only a single trajectory for each gradient estimate.
Our approach builds upon recently proposed policy evaluation algorithms \cite{Tamar2013var_TD} that learn both the expected reward-to-go and its second moment. We extend the policy gradient theorem to relate these functions with the policy gradient for the variance-penalized objective, and propose an episodic actor-critic algorithm that uses this gradient for policy improvement. We finally show that under suitable conditions, our algorithm converges almost surely to a local optimum of the variance-penalized objective.

\section{Framework and Background}\label{sec:background}

We consider an episodic MDP (also known as a stochastic shortest path problem; \cite{Ber2012DynamicProgramming})
in discrete time with a finite state space $X$, an initial state $x_0$, a terminal state $x^*$, and a finite action space $U$. The transition probabilities are denoted by $P(x'|x,u)$.
We let $\pi_\theta$ denote a policy parameterized by $\theta \in \R^n$, that determines, for each $x\in X$, a distribution over actions $P_\theta(u|x)$. We consider a deterministic and bounded reward function $r:X \to \R$, and assume zero reward at the terminal state.
We denote by $x_k$, $u_k$, and $r_k$ the state, action, and reward, respectively, at time $k$, where $k = 0,1,2,\ldots$.

A policy is said to be \emph{proper} \cite{Ber2012DynamicProgramming} if there is a positive probability that the terminal state $x^*$ will be reached after at most $n$ transitions, from any initial state. Throughout this paper we make the following two assumptions
\begin{assumption}\label{assumption:proper_policy}
The policy $\pi_\theta$ is proper for all $\theta$.
\end{assumption}

\begin{assumption}\label{assumption:grad_log_defined}
For all $\theta \in \R^n$, $x\in X$, and $u\in U$, the gradient $\Dt{ \log P_\theta(u|x)}$ is well defined and bounded.
\end{assumption}
Assumption \ref{assumption:grad_log_defined} is standard in policy gradient literature, and a popular policy representation that satisfies it is softmax action selection \cite{sutton_policy_2000},\cite{MarTsi98}.

Let $\tau\triangleq\min\{k>0|x_{k}=x^*\}$ denote the first visit time to the terminal state, and let the random variable $B$ denote the accumulated (and possibly discounted) reward along the trajectory until that time
\begin{equation*}
B\triangleq\sum_{k=0}^{\tau-1} \gamma^k r(x_k).
\end{equation*}
For a policy $\pi_\theta$ the expected reward-to-go $J^\theta:X \to \R$, also known as the \emph{value function}, is given by
\begin{equation*}
J^\theta(x)\triangleq\E^\theta\left[B | x_{0}=x\right],
\end{equation*}
where $\E^\theta$ denotes an expectation when following policy $\pi_\theta$. We similarly define
the \emph{variance of the reward-to-go} $V^\theta:X \to \R$ by
\begin{equation*}
V^\theta(x)\triangleq\textrm{Var}^\theta\left[B | x_{0}=x\right],
\end{equation*}
and the \emph{second moment of the reward-to-go} $M^\theta:X \to \R$ by
\begin{equation*}
M^\theta(x)\triangleq \E^\theta \left[B^2 | x_{0}=x\right].
\end{equation*}

Slightly abusing notation, we also define corresponding state-action functions\footnote{$J^\theta (x,u)$ is often referred to as the Q-value function and denoted $Q^\theta (x,u)$. Here, we avoid introducing a new notation to the state-action second moment function and use the same notation as for the state-dependent functions. Any ambiguity may be resolved from context.} $J^\theta:X\times U \to \R$, $V^\theta:X\times U \to \R$, and $M^\theta:X\times U \to \R$ by
\begin{equation*}
\begin{split}
J^\theta(x,u) &\triangleq \E^\theta\left[B | x_{0}=x,u_{0}=u\right], \\
V^\theta(x,u) &\triangleq \textrm{Var}^\theta\left[B | x_{0}=x,u_{0}=u\right], \\
M^\theta(x,u) &\triangleq \E^\theta\left[B^2 | x_{0}=x,u_{0}=u\right].
\end{split}
\end{equation*}

Our goal is to find a parameter $\theta$ that optimizes the variance-adjusted expected long-term return
\begin{equation}\label{eq:eta}
\begin{split}
\eta(\theta) = \eta_J(\theta) - \mu \eta_V(\theta) &\triangleq \E^{\theta}\left[B \right] - \mu \textrm{Var}^{\theta}\left[B \right] \\
&\equiv J^\theta(x_0) - \mu V^\theta(x_0),
\end{split}
\end{equation}
where $\mu \in \R$ controls the penalty on the variance of the return. Actor-critic algorithms (which are an efficient variant of policy gradient algorithms) use sampling to estimate the gradient of the objective $\Dth \eta(\theta)$, and use it to perform stochastic gradient ascent on the parameter $\theta$, thereby reaching a \emph{locally} optimal solution. Traditionally, the actor-critic framework has been developed for optimizing the expected return; in this paper we extend it to the variance-penalized setting.

\section{A Policy Gradient Theorem for the Variance}\label{sec:experiments}

Classic actor-critic algorithms are driven by the \emph{policy gradient theorem} \cite{sutton_policy_2000, Konda2002actor}, which states a relationship between the gradient $\Dth \eta_J(\theta)$ and the value function $J^{\theta}(x,u)$. Algorithmically, this suggests a natural dichotomy where the critic part of the algorithm is concerned with learning $J^{\theta}(x,u)$, and the actor part evaluates $\Dth \eta_J(\theta)$ and uses it to modify the policy parameters $\theta$. In this section we extend the policy gradient theorem to performance criteria that include the variance of the long-term return, such as \eqref{eq:eta}.

We begin by stating the classic policy gradient theorem \cite{sutton_policy_2000,Konda2002actor}, given by
\begin{equation}\label{eq:PG_theorem}
\Dt {\eta_J(\theta)} = \Dt{ J^\theta(x_0) }= \sum_{t=0}^{\infty} \sum_{x\in X} P \left( x_t = x|\pi_\theta \right) \delta J^\theta(x),
\end{equation}
where
\begin{equation*}
\delta J^\theta(x) \triangleq \sum_{u\in U} \Dt{ \pi_\theta (u|x) }J^{\theta}(x,u).
\end{equation*}
Note that if $J^{\theta}(x,u)$ is known, estimation of $\Dt{ \eta_J(\theta)}$ from a sample trajectory (with a fixed $\theta$) is straightforward, since \eqref{eq:PG_theorem} may be equivalently written as
\begin{equation}\label{eq:PG_theorem2}
\Dt {\eta_J(\theta)} = \E^{\theta} \left[ \sum_{t=0}^{\infty} \Dt {\log \pi_\theta (u_t|x_t)} J^\theta(x_t,u_t) \right],
\end{equation}
where the expectation is over trajectories.

In \cite{sato2001td}, the policy gradient theorem was extended to the variance-penalized criterion $\eta(\theta)$, using the state-action variance function $V^{\theta}(x,u)$, and without function approximation. Here, we follow a similar approach, and provide an extension to the theorem that uses $J^{\theta}(x,u)$ and $M^{\theta}(x,u)$. Incorporating function approximation will then follow naturally, using the methods of \cite{Tamar2013var_TD}.

We begin by using the relation $V = M - J^2$ to write the gradient of the variance
\begin{equation*}
\Dt {\eta_V(\theta)} = \Dt {M^\theta(x_0)} - 2 J^\theta(x_0) \Dt {J^\theta(x_0)}.
\end{equation*}
In the next proposition we derive expressions for the two terms above in the form of expectations over trajectories. The proof is given in Appendix \ref{supp:prop:PG_var_theorem}.
\begin{proposition}\label{prop:PG_var_theorem}
Let Assumptions \ref{assumption:proper_policy} and \ref{assumption:grad_log_defined} hold. Then
\begin{equation*}
 J^\theta(x_0) \Dt {J^\theta(x_0)} = \E^{\theta} \left[ J^\theta(x_0) \sum_{t=0}^{\infty} \Dt {\log \pi_\theta (u_t|x_t)} J^\theta(x_t,u_t) \right],
\end{equation*}
and
\begin{equation*}
\begin{split}
 \Dt {M^\theta(x_0)} &= \E^{\theta} \left[ \sum_{t=0}^{\infty} \Dt {\log \pi_\theta (u_t|x_t)} M^\theta(x_t,u_t) \right] \\
& + 2\E^{\theta} \left[ \sum_{t=1}^{\infty} \Dt{ \log \pi_\theta (u_t|x_t)} J^\theta(x_t,u_t) \sum_{s=0}^{t-1} r(x_s) \right].
\end{split}
\end{equation*}
\end{proposition}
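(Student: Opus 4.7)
The plan is to treat the two identities separately. The first is essentially the classical policy gradient theorem \eqref{eq:PG_theorem2} in disguise: once $\theta$ and $x_0$ are fixed, $J^\theta(x_0)$ is a deterministic scalar, so multiplying both sides of \eqref{eq:PG_theorem2} by $J^\theta(x_0)$ and absorbing it inside the expectation yields the first stated identity immediately.

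For the second identity I would use the score-function (likelihood-ratio) trick on the distribution over entire trajectories. For a trajectory $\xi=(x_0,u_0,x_1,u_1,\ldots,x_\tau)$ under $\pi_\theta$, the likelihood factorises as $P_\theta(\xi)=\prod_{t=0}^{\tau-1}\pi_\theta(u_t|x_t)\,P(x_{t+1}|x_t,u_t)$, so differentiating $M^\theta(x_0)=\E^\theta[B^2]$ under the expectation yields $\Dt M^\theta(x_0)=\E^\theta\!\left[B^2\sum_{t=0}^{\tau-1}\Dt{\log\pi_\theta(u_t|x_t)}\right]$. Then, for each $t$, I would split the return about time $t$ as $B=R_t+B_t$, where $R_t$ collects the rewards received strictly before $t$ and $B_t$ is the reward-to-go from time $t$, and expand $B^2=R_t^2+2R_tB_t+B_t^2$.

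The three resulting terms are dispatched by conditioning on successively richer histories. The $R_t^2$ contribution vanishes because $R_t$ is measurable with respect to the history up to (and including) $x_t$, while the conditional expectation of $\Dt{\log\pi_\theta(u_t|x_t)}$ given $x_t$ is zero by the usual score-function identity $\sum_u \Dt{\pi_\theta(u|x)}=0$. For the cross term $2R_tB_t$, conditioning on $(x_0,\ldots,x_t,u_t)$ and using the Markov property gives $\E^\theta[B_t\mid x_t,u_t]=J^\theta(x_t,u_t)$, producing precisely the second expectation in the proposition (which is trivially zero at $t=0$ since $R_0=0$, matching the stated lower limit $t\ge 1$). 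For the $B_t^2$ term the same conditioning yields $\E^\theta[B_t^2\mid x_t,u_t]=M^\theta(x_t,u_t)$, producing the first expectation. Summing over $t$ and swapping the sum with the expectation then completes the derivation.

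The main obstacle I anticipate is the analytic bookkeeping needed to justify the interchanges of differentiation, summation over $t$, and expectation, as well as the appearance of the random upper limit $\tau$ in the trajectory sum. This requires Assumption \ref{assumption:proper_policy} to control $\tau$ (proper policies in finite SSPs have $\E^\theta[\tau^k]<\infty$ for every $k$), together with the boundedness of the reward and of $\Dt{\log\pi_\theta(u|x)}$ from Assumption \ref{assumption:grad_log_defined}; together these give the uniform integrability needed for dominated convergence and Fubini. Beyond that, the argument is essentially an exercise in careful conditioning.
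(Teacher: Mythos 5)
Your proof is correct, but for the second identity it takes a genuinely different route from the paper. You differentiate the trajectory distribution directly (the likelihood-ratio trick), obtaining $\Dt{M^\theta(x_0)}=\E^\theta\bigl[B^2\sum_t\Dt{\log\pi_\theta(u_t|x_t)}\bigr]$, and then dispatch the expansion $B^2=R_t^2+2R_tB_t+B_t^2$ by conditioning: the $R_t^2$ term dies because the conditional mean of the score is zero, and the other two terms produce exactly the two expectations in the statement. The paper instead never differentiates under the trajectory expectation: it differentiates the Bellman-type equation $M(x,u)=r^2(x)+2r(x)\sum_y P(y|x,u)J(y)+\sum_y P(y|x,u)M(y)$, solves the resulting finite-dimensional linear recursion as $\Dth M=(I-P)^{-1}(\delta M+2RP\,\Dth J)$, and then unrolls $(I-P)^{-1}$ to recover the trajectory expectations, using dominated convergence only to swap $\E$ with the sum over $t$. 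Your route is more probabilistic and arguably more transparent about where each term comes from; the paper's route buys a cleaner justification of the one step you flag as delicate, namely the interchange of $\partial/\partial\theta_j$ with the expectation. Since the trajectory space is countably infinite, differentiating $\sum_\xi P_\theta(\xi)B(\xi)^2$ term by term requires a domination of $|\Dth P_\theta(\xi)|B(\xi)^2\le P_\theta(\xi)\,C^3\tau(\xi)^3$ that is \emph{locally uniform in $\theta$} (not just finiteness at the fixed $\theta$, which is what dominated convergence for the sum over $t$ gives you); this does hold here because properness on the finite state space yields locally uniform geometric tails for $\tau$, but you should state it explicitly rather than fold it into ``Fubini.'' The paper avoids this entirely because $(I-P)^{-1}$ is a smooth finite-dimensional function of $\theta$. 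Your treatment of the first identity coincides with the paper's.
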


Proposition \ref{prop:PG_var_theorem} together with \eqref{eq:PG_theorem2} suggest that given $J^\theta(x,u)$ and $M^\theta(x,u)$, a sample trajectory following the policy $\pi_\theta$ may be used to update the policy parameter $\theta$ in the (expected) gradient direction $\Dt {\eta(\theta)}$; this is referred to as the \emph{actor update}. In general, however, $J^\theta$ and $M^\theta$ are not known, and have to be estimated; this is referred to as the \emph{critic update}, and will be performed using the methods of \cite{Tamar2013var_TD}, as we describe next.

\section{Approximation of $J^\theta$ and $M^\theta$, and Compatible Features}\label{sec:experiments}

When the state space $X$ is large, a direct computation of $J^\theta$ and $M^\theta$ is not feasible. For the case of the value function $J^\theta$, a popular approach in this case is to approximate $J^\theta$ by restricting it to a lower dimensional subspace, and use simulation-based \emph{learning} algorithms to adjust the approximation parameters \cite{Ber2012DynamicProgramming}. Recently, this technique has been extended to the approximation of $M^\theta$ as well \cite{Tamar2013var_TD}, an approach that we similarly pursue. One problem with using an approximate $J^\theta$ and $M^\theta$ in the policy gradient formulae of Proposition \ref{prop:PG_var_theorem} is that it biases the gradient estimate, due to the approximation error of $J^\theta$ and $M^\theta$. For the case of the expected return, this issue may be avoided by representing $J^\theta$ using \emph{compatible features} \cite{sutton_policy_2000},\cite{Konda2002actor}. Interestingly, as we show here, this approach may be applied to the variance-adjusted case as well.

\subsection{A Linear Function Approximation Architecture}
We begin by defining our approximation scheme. Let $\tilde{J}^\theta(x,u)$ and $\tilde{M}^\theta(x,u)$ denote the approximations of $J^\theta(x,u)$ and $M^\theta(x,u)$, respectively. For some parameter vectors $w_J\in\R^{l}$ and $w_M\in\R^{m}$ we consider a linear approximation architecture of the form
\begin{equation*}
    \begin{split}
      \tilde{J}^\theta(x,u;w_J) &= \phi_J^\theta(x,u)^{\top} w_J, \\
      \tilde{M}^\theta(x,u;w_M) &= \phi_M^\theta(x,u)^{\top} w_M,
    \end{split}
\end{equation*}
where $\phi_J^\theta(x,u)\in\R^{l}$ and $\phi_M^\theta(x,u)\in\R^{m}$ are state-action dependent features, that may also depend on $\theta$. The low dimensional subspaces are therefore
\begin{equation*}
    \begin{split}
       S_J^\theta &= \{ \Phi_J^\theta w | w\in\R^{l} \}, \\
       S_M^\theta &= \{ \Phi_M^\theta w | w\in\R^{m} \},
     \end{split}
\end{equation*}
where $\Phi_J^\theta$ and $\Phi_M^\theta$ are matrices whose rows are ${\phi_J^\theta}^{\top}$ and ${\phi_M^\theta}^{\top}$, respectively. We make the following standard independence assumption on the features
\begin{assumption}\label{assumption:Phi_rank}
The matrix $\Phi_J^\theta$ has rank $l$ and the matrix $\Phi_M^\theta$ has rank $m$ for all $\theta\in\R^n$.
\end{assumption}
Assumption \ref{assumption:Phi_rank} is easily satisfied, for example, in the case of compatible features and the softmax action selection rule of \cite{sutton_policy_2000}.

We proceed to define how the approximation weights $w_J\in\R^{l}$ and $w_M\in\R^{m}$ are chosen. For a trajectory $x_0,\dots,x_{\tau-1}$, where the states evolve according to the MDP with policy $\pi_\theta$, define the state-action occupancy probabilities
\begin{equation*}
    q_t^{\theta}(x,u) = P(x_t = x, u_t = u|\pi_\theta),
\end{equation*}
and let
\begin{equation*}
    q^\theta(x,u) = \sum_{t=0}^{\infty} q^\theta_t(x,u).
\end{equation*}
We make the following standard assumption on the policy $\pi_\theta$ and initial state $x_0$.
\begin{assumption}\label{assumption:all_states_visited}
For all $\theta\in \R^n$, each state-action pair has a positive probability of being visited, namely, $q^\theta(x,u)>0$ for all $x\in X$ and $u \in U$.
\end{assumption}

For vectors in $\R^{X \times U}$, let $\|y\|_{q^\theta}$ denote the $q^\theta$-weighted Euclidean norm. Also, let $\Pi^\theta_J$ and $\Pi^\theta_M$ denote the projection operators from $\R^{X \times U}$ onto the subspaces $S_J^\theta$ and $S_M^\theta$, respectively, with respect to this norm. The approximations $\tilde{J}^\theta(x,u)$ and $\tilde{M}^\theta(x,u)$ are finally given by
\begin{equation*}
\tilde{J}^\theta = \Pi^\theta_J J^\theta, \quad \text{and} \quad \tilde{M}^\theta = \Pi^\theta_M M^\theta.
\end{equation*}

\subsection{Compatible Features}
The idea of compatible features is to identify features for which the function approximation does not bias the gradient estimation. This approach is driven by the insight that the policy gradient theorem may be written in the form of an inner product as follows \cite{Konda2002actor}. Let $\left< \cdot, \cdot \right>_{q^\theta}$ denote the ${q^\theta}-$weighted inner product on $\R^{X \times U}$:
\begin{equation*}
\left< J_1, J_2 \right>_{q^\theta} \triangleq \sum_{x\in X, u\in U} q^{\theta}(x,u)J_1(x,u)J_2(x,u).
\end{equation*}
Eq. \eqref{eq:PG_theorem} may be written as
\begin{equation*}
    \Dt {\eta_J} = \left< \psi^\theta_j , J^\theta \right>_{q^\theta},
\end{equation*}
where $\psi^\theta_j(x,u) = \Dth \log \pi_\theta (u|x)$. Now, observe that if $Span\left\{\psi^\theta\right\} \subset S_J^\theta$ we have that $\left< \psi^\theta_j , J^\theta \right>_{q^\theta} = \left< \psi^\theta_j , \Pi^\theta_J J^\theta \right>_{q^\theta}$ for all $j$, therefore, replacing the value function with its approximation does not bias the gradient. We now extend this idea to the variance-adjusted case.

We would like to write the gradient of the variance in a similar inner product form as described above. A comparison of the terms in Proposition \ref{prop:PG_var_theorem} with the terms in Eq. \eqref{eq:PG_theorem} shows that the only difficulty is in the second term of $\Dt {M^\theta(x_0)}$, where the sum $\sum_{s=0}^{t-1} r(x_s)$ appears in the expectation. We therefore define the \emph{weighted} state-action occupancy probabilities
\begin{equation*}
    \tilde{q}^\theta(x,u) = \sum_{t=1}^{\infty} P(x_t = x, u_t = u|\pi_\theta) \E^{\theta}\left[ \left. \sum_{s=0}^{t-1} r(x_s)\right| x_t = x, \pi \right],
\end{equation*}
and we make the following positiveness assumption on $\tilde{q}^\theta(x,u)$:
\begin{assumption}\label{assumption:tilde_q_positive}
For all $x,$ $u,$ and $\theta$ we have that $\tilde{q}^\theta(x,u)>0$.
\end{assumption}
Assumption \ref{assumption:tilde_q_positive} may easily be satisfied by adding a constant baseline to the reward. Let $\left< \cdot, \cdot \right>_{\tilde{q}^\theta}$ denote the ${\tilde{q}^\theta}-$weighted inner product on $\R^{X \times U}$, and let $\|y\|_{\tilde{q}^\theta}$ denote the corresponding $\tilde{q}^\theta$-weighted Euclidean norm, which is well-defined due to Assumption \ref{assumption:tilde_q_positive}. Also, let $\tilde{\Pi}^\theta_J$ denote the projection operator from $\R^{X \times U}$ onto the subspace $S_J^\theta$ with respect to this norm.

As outlined earlier, we make the following \emph{compatibility} assumption on the features:
\begin{assumption}\label{assumption:compatible_features}
For all $\theta$ we have $Span\left\{\psi^\theta\right\} \subset S_J^\theta$ and $Span\left\{\psi^\theta\right\} \subset S_M^\theta$.
\end{assumption}

The next proposition shows that when using compatible features, the approximation error does not bias the gradient estimation.
\begin{proposition}\label{prop:compatible}
Let Assumptions \ref{assumption:proper_policy}, \ref{assumption:grad_log_defined}, \ref{assumption:Phi_rank}, \ref{assumption:all_states_visited}, \ref{assumption:tilde_q_positive}, and \ref{assumption:compatible_features} hold. Then
    \begin{equation*}
    \begin{split}
    \Dt {\eta_V} &= \left< \psi_j , M \right>_{q^\theta} + 2\left< \psi_j , J \right>_{\tilde{q}^\theta} - 2J(x_0)\left< \psi_j , J \right>_{q^\theta} \\
    &= \left< \psi_j , \Pi^\theta_M M \right>_{q^\theta} + 2\left< \psi_j , \tilde{\Pi}^\theta_J J \right>_{\tilde{q}^\theta} - 2J(x_0)\left< \psi_j , \Pi^\theta_J J \right>_{q^\theta}
    \end{split}
    \end{equation*}
\end{proposition}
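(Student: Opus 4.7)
The plan is to prove the two equalities in turn. The first equality rewrites the trajectory expectations in Proposition \ref{prop:PG_var_theorem} as $q^\theta$- and $\tilde{q}^\theta$-weighted inner products; the second is then an immediate consequence of orthogonality of the projections combined with Assumption \ref{assumption:compatible_features}.

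For the first equality, I start from $\Dt{\eta_V(\theta)} = \Dt{M^\theta(x_0)} - 2J^\theta(x_0)\Dt{J^\theta(x_0)}$ and substitute the two expressions from Proposition \ref{prop:PG_var_theorem}. In the first term of $\Dt{M^\theta(x_0)}$ I exchange the expectation with the sum over $t$ and rewrite it as $\sum_t \sum_{x,u} P(x_t=x, u_t=u \mid \pi_\theta)\, \psi^\theta_j(x,u)\, M^\theta(x,u)$; summing over $t$ and invoking the definition of $q^\theta$ collapses this to $\langle \psi_j, M \rangle_{q^\theta}$. The same manipulation applied to the expression for $J^\theta(x_0)\Dt{J^\theta(x_0)}$ yields $J^\theta(x_0)\langle \psi_j, J \rangle_{q^\theta}$, producing the third term of the proposition (with its sign).

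The second term of $\Dt{M^\theta(x_0)}$ is the main place where care is needed. I use the tower property, conditioning on $(x_t,u_t)$, to pull the factor $\sum_{s=0}^{t-1}r(x_s)$ inside as a conditional expectation. Because the past rewards are functions of $x_0,\ldots,x_{t-1}$ and the Markov property makes them independent of $u_t$ given $x_t$, the conditioning on $u_t$ drops out, leaving $\E^{\theta}[\sum_{s=0}^{t-1}r(x_s)\mid x_t=x, \pi]$, which matches the definition of $\tilde{q}^\theta$ exactly. Summing over $t\ge 1$ and over $(x,u)$ therefore produces $2\langle \psi_j, J \rangle_{\tilde{q}^\theta}$. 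Combining the three pieces yields the top line of the proposition.

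For the second equality I use the standard fact that an orthogonal projection $\Pi$ onto a subspace $S$ in a Hilbert space satisfies $\langle g, h\rangle = \langle g, \Pi h\rangle$ for every $g\in S$. Assumption \ref{assumption:compatible_features} says that $\psi_j^\theta \in S_M^\theta$ and $\psi_j^\theta \in S_J^\theta$. Applying this observation three times, once in each of the three inner products, gives
\begin{equation*}
\langle \psi_j, M\rangle_{q^\theta} = \langle \psi_j, \Pi^\theta_M M\rangle_{q^\theta}, \quad
\langle \psi_j, J\rangle_{\tilde{q}^\theta} = \langle \psi_j, \tilde{\Pi}^\theta_J J\rangle_{\tilde{q}^\theta}, \quad
\langle \psi_j, J\rangle_{q^\theta} = \langle \psi_j, \Pi^\theta_J J\rangle_{q^\theta},
\end{equation*}
and substituting into the first equality completes the proof. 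Assumptions \ref{assumption:Phi_rank} and \ref{assumption:all_states_visited} ensure the $q^\theta$-weighted inner product is well defined and the projections exist and are unique, while Assumption \ref{assumption:tilde_q_positive} plays the same role for the $\tilde{q}^\theta$-weighted inner product.

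The only nontrivial step is the identification of the second term in $\Dt{M^\theta(x_0)}$ with $2\langle\psi_j,J\rangle_{\tilde{q}^\theta}$; everything else is a direct rewriting of expectations as weighted sums and a one-line appeal to orthogonality.
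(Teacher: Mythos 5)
Your proof is correct and takes essentially the same route as the paper's: the first equality is obtained by rewriting the trajectory expectations of Proposition~\ref{prop:PG_var_theorem} as $q^\theta$- and $\tilde{q}^\theta$-weighted inner products (a step the paper compresses to ``by definition''), and the second follows from orthogonality of the projections together with Assumption~\ref{assumption:compatible_features}. Your observation that the conditioning on $u_t$ drops out of $\E^{\theta}\left[\sum_{s=0}^{t-1}r(x_s)\,\middle|\,x_t,u_t\right]$ because $u_t$ is conditionally independent of the past given $x_t$ --- so that the resulting weight matches $\tilde{q}^\theta$ exactly as defined --- is a detail the paper leaves implicit, and you handle it correctly.
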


\begin{proof}
By Assumptions \ref{assumption:all_states_visited} and \ref{assumption:tilde_q_positive} the inner products $\left< \cdot , \cdot \right>_{\tilde{q}^\theta}$ and $\left< \cdot , \cdot \right>_{q^\theta}$ are well defined. By Assumptions \ref{assumption:proper_policy} and \ref{assumption:grad_log_defined} Proposition \ref{prop:PG_var_theorem} holds, therefore we have, by definition,
\begin{equation*}
    \Dt {\eta_V} = \left< \psi_j , M \right>_{q^\theta} + 2\left< \psi_j , J \right>_{\tilde{q}^\theta} - 2J(x_0)\left< \psi_j , J \right>_{q^\theta}.
\end{equation*}
By Assumptions \ref{assumption:Phi_rank}, \ref{assumption:all_states_visited}, and \ref{assumption:tilde_q_positive}  the projections $\Pi^\theta_J$, $\Pi^\theta_M$, and $\tilde{\Pi}^\theta_J$ are well-defined and unique. Now, by Assumption \ref{assumption:compatible_features}, and the definition of the inner product and projection we have
\begin{equation*}
\begin{split}
\left< \psi_j , J \right>_{q^\theta} &= \left< \psi_j , \Pi^\theta_J J \right>_{q^\theta}, \\
\left< \psi_j , M \right>_{q^\theta} &= \left< \psi_j , \Pi^\theta_M M \right>_{q^\theta}, \\
\left< \psi_j , J \right>_{\tilde{q}^\theta} &= \left< \psi_j , \tilde{\Pi}^\theta_J J \right>_{\tilde{q}^\theta},
\end{split}
\end{equation*}
yielding the desired result.
\end{proof}

In the next section, based on Proposition \ref{prop:compatible}, we derive an actor-critic algorithm.

\section{An Episodic Actor-Critic Algorithm}

In this section, based on the results established earlier, we propose an episodic actor-critic algorithm, and show that it converges to a locally optimal point of $\eta(\theta)$.

Our algorithm works in episodes, where in each episode we simulate a trajectory of the MDP with a fixed policy. Let $\tau^i$ and $x^i_0,u^i_0,\dots,x^i_{\tau^i},u^i_{\tau^i}$ denote the termination time and state-action trajectory in episode $i$, and let $\theta^i$ denote the policy parameters for that episode.
Our actor-critic algorithm proceeds as follows. The critic maintains three weight vectors $w_J$, $w_M$, and $\tilde{w}_J$, and in addition maintains an estimate of $J(x_0)$, denoted by $J_0$. These parameters are updated episodically as follows:
\begin{equation}\label{eq:critic}
\begin{split}
w_J^{i+1} &= w_J^{i} + \alpha_i \sum_{t=0}^{\tau^i} \left( \sum_{s=t}^{\tau^i} r(x_s^i) - {w_J^{i-1}}^{\top} \phi_J^{\theta_i}(x_t^i,u_t^i) \right)\phi_J^{\theta_i}(x_t^i,u_t^i), \\
w_M^{i+1} &= w_M^{i} + \alpha_i \sum_{t=0}^{\tau^i} \left( \left(\sum_{s=t}^{\tau^i} r(x_s^i)\right)^2 - {w_M^{i-1}}^{\top} \phi_M^{\theta_i}(x_t^i,u_t^i) \right)\phi_M^{\theta_i}(x_t^i,u_t^i), \\
\tilde{w}_J^{i+1} &= \tilde{w}_J^{i} + \alpha_i \sum_{t=1}^{\tau^i} \left( \sum_{s=0}^{t-1}r(x_s^i)\right)\left( \sum_{s=t}^{\tau^i} r(x_s^i) - (\tilde{w}_J^{i-1})^{\top} \phi_J^{\theta_i}(x_t^i,u_t^i) \right)\phi_J^{\theta_i}(x_t^i,u_t^i), \\
J_0^{i+1} &= J_0^{i} + \alpha_i \left( \sum_{t=0}^{\tau^i} r(x_t^i) - J_0^{i-1} \right).
\end{split}
\end{equation}
The actor, in turn, updates the policy parameters according to
\begin{equation}\label{eq:actor1}
\theta^{i+1}_j = \theta^{i}_j + \beta_i \frac{\hat{\partial \eta(\theta)}}{\partial \theta},
\end{equation}
where the estimated policy gradient is given by
\begin{equation}\label{eq:actor2}
\frac{\hat{\partial \eta(\theta)}}{\partial \theta} \!=\! \sum_{t=0}^{\tau^i} \! \Dt {\log \! \pi_\theta (u_t^i|x_t^i)} \! \left( \phi_J^{\theta_i}(x_t^i,u_t^i)^{\top} \! w_J^i \!-\! \mu \! \left( \phi_M^{\theta_i}(x_t^i,u_t^i)^{\top} \! w_M^i \!-\! 2\phi_J^{\theta_i}(x_t^i,u_t^i)^{\top} \! \tilde{w}_J^i \!+\! 2J_0^i \phi_J^{\theta_i}(x_t^i,u_t^i)^{\top} \! w_J^i\right) \right).
\end{equation}

We now show that the proposed actor-critic algorithm converges w.p. 1 to a locally optimal policy. We make the following assumption on the set of locally optimal point of $\eta(\theta)$.
\begin{assumption}\label{assumption:countable_optima}
The objective function $\eta(\theta)$ has bounded second derivatives for all $\theta$. Furthermore, the set of local optima of $\eta(\theta)$ is countable.
\end{assumption}
Assumption \ref{assumption:countable_optima} is a technical requirement for the convergence of the iterates. The smoothness assumption is standard in stochastic gradient descent methods \cite{MarTsi98}. The countable set of local optima is similar to an assumption in \cite{Leslie02multiple}, and indeed our analysis follows along similar lines. When the assumption is not satisfied, our result may be extended to convergence within some set of locally optimal points. We now state our main result.

\begin{theorem}
Consider the algorithm in \eqref{eq:critic}-\eqref{eq:actor2}, and let Assumptions \ref{assumption:proper_policy}, \ref{assumption:grad_log_defined}, \ref{assumption:Phi_rank}, \ref{assumption:all_states_visited}, \ref{assumption:tilde_q_positive}, \ref{assumption:compatible_features}, and \ref{assumption:countable_optima} hold. If the step size sequences satisfy $\sum_{i} \alpha_{i} = \sum_{i} \beta_{i} = \infty $, $\sum_{i} \alpha_{i}^2 < \infty, \sum_{i} \beta_{i}^2 < \infty $, and $\lim_{i \to \infty} \frac{\beta_{i}}{\alpha_{i}} = 0$,
then almost surely
\begin{equation}
\lim_{i \to \infty} \Dt {\eta (\theta_i)} = 0.
\end{equation}
\end{theorem}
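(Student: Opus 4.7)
The plan is to exploit the two-timescale structure induced by $\beta_i/\alpha_i \to 0$: relative to the fast-timescale critic iterates $(w_J^i, w_M^i, \tilde{w}_J^i, J_0^i)$, the actor parameter $\theta^i$ looks essentially constant, while from the actor's slow timescale the critic has already equilibrated. I would follow the standard Borkar two-timescale template---first analyse the critic recursions with $\theta$ frozen, then substitute the equilibrated critic into the actor recursion and invoke an ODE argument.

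For the critic step, each recursion in \eqref{eq:critic} can be written in the form $y^{i+1} = y^i + \alpha_i(h(y^i,\theta) + N^i)$, with $h$ the conditional mean of the per-episode increment and $N^i$ a martingale-difference noise. Taking the expectation over a full episode, the mean direction of the $w_J$ update is $\Phi_J^{\theta\top} Q^\theta (J^\theta - \Phi_J^\theta w_J)$, where $Q^\theta$ is the diagonal matrix of occupancies $q^\theta(x,u)$; Assumptions \ref{assumption:Phi_rank} and \ref{assumption:all_states_visited} make the unique root $w_J^*(\theta)$ coincide with the coordinates of $\Pi^\theta_J J^\theta$. The $w_M$ recursion is identical with the Monte-Carlo target $G_t \triangleq \sum_{s=t}^{\tau} r(x_s)$ replaced by $G_t^2$, yielding $\Pi^\theta_M M^\theta$. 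For $\tilde{w}_J$, the extra multiplicative factor $\sum_{s=0}^{t-1} r(x_s)$ together with the Markov property (under which $G_t$ and the past are conditionally independent given $x_t,u_t$) turns the effective state-action weighting into exactly $\tilde{q}^\theta(x,u)$, and Assumption \ref{assumption:tilde_q_positive} guarantees a well-defined unique fixed point equal to $\tilde{\Pi}^\theta_J J^\theta$. Finally $J_0^i \to J^\theta(x_0)$ by a scalar Robbins--Monro argument. Properness (Assumption \ref{assumption:proper_policy}) supplies uniformly bounded moments of $\tau$, which in turn yields the square-integrability and Lipschitz hypotheses needed for Borkar's ODE theorem to produce almost-sure tracking of $(w_J^*(\theta^i), w_M^*(\theta^i), \tilde{w}_J^*(\theta^i), J^{\theta^i}(x_0))$.

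For the actor, I would substitute the equilibrated critic into \eqref{eq:actor2} and take the conditional expectation. By Proposition \ref{prop:compatible} combined with the policy gradient identity \eqref{eq:PG_theorem2}, the result is precisely $\Dt{\eta(\theta^i)}$, so the actor recursion becomes $\theta^{i+1}_j = \theta^i_j + \beta_i\bigl(\Dt{\eta(\theta^i)} + \varepsilon^i_j + \xi^i_j\bigr)$, where $\varepsilon^i \to 0$ captures the critic tracking error and $\xi^i$ is a martingale difference. Assumption \ref{assumption:countable_optima} then provides the smoothness required to invoke a Kushner--Clark-style ODE argument (as in \cite{Leslie02multiple}): the iterates track the gradient flow $\dot{\theta} = \nabla \eta(\theta)$ almost surely, and countability of the stationary set upgrades this to convergence of $\theta^i$ to a single equilibrium, whence $\Dt{\eta(\theta^i)} \to 0$.

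The main obstacle I anticipate is twofold. First, because $\phi_J^\theta$ and $\phi_M^\theta$ depend on $\theta$, the fast-timescale equilibria $w_J^*(\theta), w_M^*(\theta), \tilde{w}_J^*(\theta)$ are themselves $\theta$-dependent and must be shown to be Lipschitz in $\theta$ for the two-timescale tracking lemma to apply; this requires a uniform lower bound on the smallest eigenvalue of $\Phi_J^{\theta\top} Q^\theta \Phi_J^\theta$ and its $\tilde{q}^\theta$-weighted analogue across $\theta$, which must be extracted from Assumptions \ref{assumption:Phi_rank}, \ref{assumption:all_states_visited}, and \ref{assumption:tilde_q_positive} together with continuity of $\pi_\theta$ in $\theta$. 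Second, one must establish a priori boundedness of the iterates before the ODE theorem can be applied; this would be handled by viewing each critic recursion as a contraction toward its bounded fixed point and by a Lyapunov or projection argument on the actor, exploiting the boundedness of $\Dt{\eta}$ that follows from Assumptions \ref{assumption:grad_log_defined} and \ref{assumption:countable_optima}.
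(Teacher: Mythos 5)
Your proposal follows essentially the same route as the paper's proof: a two-timescale stochastic approximation argument in which the critic recursions are shown (in expectation over an episode) to be gradient descent on the $q^\theta$- and $\tilde{q}^\theta$-weighted least-squares objectives with unique fixed points $\Pi^\theta_J J^\theta$, $\Pi^\theta_M M^\theta$, $\tilde{\Pi}^\theta_J J^\theta$, and $J^\theta(x_0)$, after which Proposition \ref{prop:compatible} identifies the actor's limiting ODE with the gradient flow of $\eta$, and countability of the local optima (Assumption \ref{assumption:countable_optima}) yields almost-sure convergence via the result of \cite{Leslie02multiple}. The technical points you flag (Lipschitz dependence of the critic equilibria on $\theta$ and a priori boundedness of the iterates) are genuine gaps in the paper's sketch as well, so raising them is appropriate, but the overall argument is the same.
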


\begin{proof} (sketch)
The proof relies on representing Equations \eqref{eq:critic}-\eqref{eq:actor2} as a stochastic approximation with two time-scales \cite{Borkar97two-timescale}, where the critics parameters $w_J^i$, $w_M^i$, $\tilde{w}_J^i$, and $J_0^i$ are updated on a fast schedule while $\theta^i$ is updated on a slow schedule. Thus, $\theta^i$ may be seen as quasi-static w.r.t. $w_J^i$, $w_M^i$, $\tilde{w}_J^i$, and $J_0^i$.
We now calculate the expected updates in \eqref{eq:critic} for a given $\theta$. Note that we can safely assume that the features at the terminal state are zero, i.e., $\phi_J^{\theta}(x^*,u)=\phi_M^{\theta}(x^*,u)=0$ for all $\theta,u$. Since the reward at the terminal state is also zero by definition, we can replace the sums until $\tau^i$ with infinite sums, e.g.,
\begin{equation*}
\E^{\theta} \left[ \sum_{t=0}^{\tau} \left( \sum_{s=t}^{\tau} r(x_s) - {w_J}^{\top} \phi_J^{\theta}(x_t,u_t) \right)\phi_J^{\theta}(x_t,u_t) \right] = \E^{\theta} \left[ \sum_{t=0}^{\infty} \left( \sum_{s=t}^{\infty} r(x_s) - {w_J}^{\top} \phi_J^{\theta}(x_t,u_t) \right)\phi_J^{\theta}(x_t,u_t) \right]
\end{equation*}
By the dominated convergence theorem, using the fact that the terms in the sum are bounded and that the Markov chain is absorbing (Assumption \ref{assumption:proper_policy}), we may switch between the first sum and expectation:
\begin{equation*}
\E^{\theta} \left[ \sum_{t=0}^{\infty} \left( \sum_{s=t}^{\infty} r(x_s) - {w_J}^{\top} \phi_J^{\theta}(x_t,u_t) \right)\phi_J^{\theta}(x_t,u_t) \right] = \sum_{t=0}^{\infty} \E^{\theta} \left[ \left( \sum_{s=t}^{\infty} r(x_s) - {w_J}^{\top} \phi_J^{\theta}(x_t,u_t) \right)\phi_J^{\theta}(x_t,u_t) \right]
\end{equation*}
We now have
\begin{equation*}
\begin{split}
& \sum_{t=0}^{\infty} \E^{\theta} \left[ \left( \sum_{s=t}^{\infty} r(x_s) - {w_J}^{\top}
\phi_J^{\theta}(x_t,u_t) \right)\phi_J^{\theta}(x_t,u_t) \right] \\
&\stackrel{(a)}{=} \sum_{t=0}^{\infty} \E^{\theta} \left[ \left. \E^{\theta} \left[ \left( \sum_{s=t}^{\infty} r(x_s) - {w_J}^{\top} \phi_J^{\theta}(x_t,u_t) \right)\phi_J^{\theta}(x_t,u_t) \right| x_t,u_t \right] \right] \\
&\stackrel{(b)}{=} \sum_{t=0}^{\infty} \E^{\theta} \left[ \left( J^{\theta}(x_t,u_t) - {w_J}^{\top} \phi_J^{\theta}(x_t,u_t) \right)\phi_J^{\theta}(x_t,u_t) \right] \\
&\stackrel{(c)}{=} \sum_{t=0}^{\infty} \sum_{x\in X, u\in U} q_t^{\theta}(x,u) \left( J^{\theta}(x,u) - {w_J}^{\top} \phi_J^{\theta}(x,u) \right)\phi_J^{\theta}(x,u) \\
&\stackrel{(d)}{=} \sum_{x\in X, u\in U} q^{\theta}(x,u) \left( J^{\theta}(x,u) - {w_J}^{\top} \phi_J^{\theta}(x,u) \right)\phi_J^{\theta}(x,u) \\
&\stackrel{(e)}{=} -\nabla_{w_J} \left( \frac{1}{2} \sum_{x\in X, u\in U} q^{\theta}(x,u) \left( J^{\theta}(x,u) - {w_J}^{\top} \phi_J^{\theta}(x,u) \right)^2 \right)\\
\end{split}
\end{equation*}
where $(a)$ is by the law of iterated expectation, $(b)$ is by definition of $J^\theta(x,u)$, $(c)$ is by definition of $q_t^{\theta}(x,u)$, $(d)$ is by reordering the summations and the definition of $q^{\theta}(x,u)$, and $(e)$ follows by taking the gradient of the squared term.
Similarly, we have
\begin{equation*}
\begin{split}
& \E^{\theta} \left[ \sum_{t=0}^{\tau} \left( \left(\sum_{s=t}^{\tau} r(x_s)\right)^2 - {w_M}^{\top} \phi_M^{\theta}(x_t,u_t) \right)\phi_M^{\theta}(x_t,u_t) \right] \\
& = -\nabla_{w_M} \left( \frac{1}{2} \sum_{x\in X, u\in U} q^{\theta}(x,u) \left( M^{\theta}(x,u) - {w_M}^{\top} \phi_M^{\theta}(x,u) \right)^2 \right),
\end{split}
\end{equation*}
and
\begin{equation*}
\begin{split}
& \E^{\theta} \left[ \sum_{t=1}^{\tau} \left( \sum_{s=0}^{t-1}r(x_s)\right)\left( \sum_{s=t}^{\tau} r(x_s) - (\tilde{w}_J)^{\top} \phi_J^{\theta}(x_t,u_t) \right)\phi_J^{\theta}(x_t,u_t) \right] \\
& = -\nabla_{\tilde{w}_J} \left( \frac{1}{2} \sum_{x\in X, u\in U} \tilde{q}^{\theta}(x,u) \left( J^{\theta}(x,u) - {\tilde{w}_J}^{\top} \phi_J^{\theta}(x,u) \right)^2 \right).
\end{split}
\end{equation*}

Therefore, the updates for $w_J^i$, $w_M^i$, and $\tilde{w}_J^i$ may be associated with the following ordinary differential equations (ODE)
\begin{equation}\label{eq:ODE1}
\begin{split}
\dot{w_J} &= -\nabla_{w_J} \left( \frac{1}{2} \sum_{x\in X, u\in U} q^{\theta}(x,u) \left( J^{\theta}(x,u) - {w_J}^{\top} \phi_J^{\theta}(x,u) \right)^2 \right), \\
\dot{w_M} &= -\nabla_{w_M} \left( \frac{1}{2} \sum_{x\in X, u\in U} q^{\theta}(x,u) \left( M^{\theta}(x,u) - {w_M}^{\top} \phi_M^{\theta}(x,u) \right)^2 \right), \\
\dot{\tilde{w}}_J &= -\nabla_{\tilde{w}_J} \left( \frac{1}{2} \sum_{x\in X, u\in U} \tilde{q}^{\theta}(x,u) \left( J^{\theta}(x,u) - {\tilde{w}_J}^{\top} \phi_J^{\theta}(x,u) \right)^2 \right).
\end{split}
\end{equation}
Similarly, the update for $J_0$ is associated with the following ODE
\begin{equation}\label{eq:ODE2}
\dot{J_0} = J(x_0) - J_0.
\end{equation}
For each $\theta$, equations \eqref{eq:ODE1} and \eqref{eq:ODE2} have unique stable fixed points, denoted by $w_J^\infty$, $w_M^\infty$, $\tilde{w}_J^\infty$, and $J_0^\infty$, that satisfy
\begin{equation*}
\begin{split}
\Phi_J^\theta w_J^\infty &= \Pi^\theta_J J^\theta, \\
\Phi_M^\theta w_M^\infty &= \Pi^\theta_M M^\theta, \\
\Phi_J^\theta \tilde{w}_J^\infty &= \tilde{\Pi}^\theta_J J^\theta, \\
J_0^\infty &= J(x_0),
\end{split}
\end{equation*}
where the uniqueness of the projection weights is due to Assumption \ref{assumption:Phi_rank}.

We now return to the actor's update, Eq. \eqref{eq:actor1}-\eqref{eq:actor2}. Due to the timescale difference, $w_J^i$, $w_M^i$, $\tilde{w}_J^i$, and $J_0^i$ in the iteration for $\theta_i$ may be replaced with their stationary limit points $w_J^\infty$, $w_M^\infty$, $\tilde{w}_J^\infty$, and $J_0^\infty$, suggesting the following ODE for $\theta$
\begin{equation}\label{eq:ODE3}
\begin{split}
\dot{\theta_j} &= \left< \psi^\theta_j , \Pi^\theta_J J^\theta \right>_{q^\theta} -\mu \left( \left< \psi_j , \Pi^\theta_M M \right>_{q^\theta} + 2\left< \psi_j , \tilde{\Pi}^\theta_J J \right>_{\tilde{q}^\theta} - 2J(x_0)\left< \psi_j , \Pi^\theta_J J \right>_{q^\theta} \right) \\
&= \Dt {\eta},
\end{split}
\end{equation}
where the second equality is by Proposition \ref{prop:compatible}.

By Assumption \ref{assumption:countable_optima}, the set of stable fixed point of \eqref{eq:ODE3} is just the set of locally optimal points of the objective function $\eta(\theta)$. Let $\mathcal{Z}$ denote this set, which by Assumption \ref{assumption:countable_optima} is countable. Then, by Theorem 5 in \cite{Leslie02multiple} (which is extension of Theorem 1.1 in \cite{Borkar97two-timescale}), $\theta_i$ converges to a point in $\mathcal{Z}$ almost surely.
\end{proof}

\section{Conclusion}
We presented an actor-critic framework for a variance-penalized performance objective. Our framework extends both the policy gradient theorem and compatible features concept, which are standard tools in RL literature. To our knowledge, this is the first actor-critic algorithm that provably converges to a local optima of a variance adjusted objective function.

We remark on the practical implementation of our algorithm. The critic update equations \eqref{eq:critic} are somewhat inefficient, as they use an incremental gradient method for obtaining the least squares projections $\tilde{J}$ and $\tilde{M}$. While this is convenient for analysis purposes, more efficient approaches exist, for example, the least-squares approach proposed in \cite{Tamar2013var_TD}. Another option is to use a temporal difference (TD) approach, also proposed in \cite{Tamar2013var_TD}. While a TD approach induces bias, due to the difference between the TD fixed point and the least squares projection, this bias may be bounded, and is often small in practice. We note that a modification of these methods to produce a weighted projection is required for obtaining the weights $\tilde{w}_J$, but this could be done easily, for example by using a weighted least-squares procedure.

Finally, this work joins a collection of recent studies \cite{Tamar2012mean_var}, \cite{Tamar2013var_TD}, \cite{prasanth2013actor}, that extend RL to variance related performance criteria. The relatively simple extension of standard RL techniques to these criteria advocate their use as risk-sensitive performance measures in RL. This is in contrast to results in planning in MDPs, where global optimization of the expected return with a variance constraint was shown to be computationally hard \cite{mannor2011mean}. The algorithm considered here avoids this difficulty by considering only \emph{local} optimality.


%

\appendices
\section{Proof of Proposition \ref{prop:PG_var_theorem}\label{supp:prop:PG_var_theorem}}
\begin{proof}
Since the statement holds for each value of $\theta$ independently, we assume a fixed policy throughout the proof and drop the $\theta$ super-script and sub-script from $\pi, P, J,$ and $M$ to reduce notational clutter.
Also, let $e(i)\in \R^{| X |}$ denote a vector of zeros with the $i'$th element equal to one.

The first result is straightforward, and follows from \eqref{eq:PG_theorem2}
\begin{equation*}
\begin{split}
& \E \left[ J(x_0) \sum_{t=0}^{\infty} \Dth \log \pi (u_t|x_t) J(x_t,u_t) \right] \\
&= J(x_0)\E \left[ \sum_{t=0}^{\infty} \Dth \log \pi (u_t|x_t) J(x_t,u_t) \right] \\
&= J(x_0)\Dt {J(x_0)}.
\end{split}
\end{equation*}

We now prove the second result.
First, we have for all $x\in X$
\begin{equation*}
M(x) = \sum_{u\in U} \pi(u|x) M(x,u),
\end{equation*}
therefore, taking a gradient gives
\begin{equation}\label{eq:PG_proof_1}
\Dth M(x) = \sum_{u\in U} \Dth \pi(u|x) M(x,u) + \sum_{u\in U} \pi(u|x) \Dth M(x,u).
\end{equation}
An extension of Bellman's equation may be written for $M(x,u)$, similarly to Proposition 2 in \cite{Tamar2013var_TD}
\begin{equation*}
M(x,u) = r^2(x) + 2r(x)\sum_{y\in X}P(y|x,u)J(y) + \sum_{y\in X}P(y|x,u)M(y).
\end{equation*}
Taking a gradient of both sides gives (note that $P(y|x,u)$ is independent of the policy parameter $\theta$)
\begin{equation*}
\Dth M(x,u) =  2r(x)\sum_{y\in X}P(y|x,u) \Dth J(y) + \sum_{y\in X}P(y|x,u) \Dth M(y),
\end{equation*}
therefore, taking an expectation of $u$ given $x$
\begin{equation}\label{eq:PG_proof_2}
\sum_{u\in U} \pi(u|x) \Dth M(x,u) = 2r(x)\sum_{y\in X}P(y|x) \Dth J(y) + \sum_{y\in X}P(y|x) \Dth M(y).
\end{equation}
Plugging \eqref{eq:PG_proof_2} in \eqref{eq:PG_proof_1} and using matrix notation gives
\begin{equation*}
\Dth M = \delta M + 2RP \Dth J + P \Dth M,
\end{equation*}
where $\delta M(x) = \sum_{u\in U} \Dth \pi(u|x) M(x,u)$. Rearranging, and using the fact that $I-P$ is invertible (cf. Proposition 2 in \cite{Tamar2013var_TD}) we have
\begin{equation}\label{eq:PG_proof_3}
\Dth M = (I-P)^{-1} \left(\delta M + 2RP \Dth J\right).
\end{equation}
We now treat each of the terms in \eqref{eq:PG_proof_3} separately. For the first term, $(I-P)^{-1}\delta M$, we follow a similar procedure as in the original policy gradient theorem. We have
\begin{equation}\label{eq:PG_proof_4}
\begin{split}
e(x_0)^{\top} (I-P)^{-1} \delta M &\stackrel{(a)}{=} e(x_0)^{\top} \left( \sum_{t=0}^{\infty} P^t \right) \delta M \\
&\stackrel{(b)}{=} \sum_{t=0}^{\infty}\sum_{x\in X} P(\left. x_t = x \right)\delta M(x) \\
&\stackrel{(c)}{=} \sum_{t=0}^{\infty}\sum_{x\in X} P(\left. x_t = x \right)\sum_{u\in U} \Dth \pi(u|x) M(x,u) \\
&\stackrel{(d)}{=} \sum_{t=0}^{\infty}\sum_{x\in X,u\in U} P( x_t = x, u_t = u ) \Dth \log\pi(u|x) M(x,u) \\
&\stackrel{(e)}{=} \sum_{t=0}^{\infty} \E \left[ \Dth \log \pi_\theta (u_t|x_t) M^\theta(x_t,u_t) \right] \\
&\stackrel{(f)}{=} \E \left[ \sum_{t=0}^{\infty} \Dth \log \pi_\theta (u_t|x_t) M^\theta(x_t,u_t) \right],
\end{split}
\end{equation}
where in $(a)$ we unrolled $(I-P)^{-1}$; in $(b)$ we used a standard Markov chain property, recalling that our initial state is $x_0$; $(c)$ is by definition of $\delta M$; $(d)$ and $(e)$ are by algebraic manipulations; and $(f)$ holds by the dominated convergence theorem, using the fact that the terms in the sum are bounded (Assumption \ref{assumption:grad_log_defined}) and that the Markov chain is absorbing (Assumption \ref{assumption:proper_policy}).

Now, for the second term, using \eqref{eq:PG_theorem} we have
\begin{equation}\label{eq:PG_proof_5}
e(x_0)^{\top}(I-P)^{-1} 2RP \Dth J = e(x_0)^{\top}(I-P)^{-1} 2RP (I-P)^{-1} \delta J.
\end{equation}
We now show that the last term in Proposition \ref{prop:PG_var_theorem} is equal to the right hand side of \eqref{eq:PG_proof_5}
\begin{equation}\label{eq:PG_proof_6}
\begin{split}
& \E \left[ \sum_{t=1}^{\infty} \Dth \log \pi_\theta (u_t|x_t) J^\theta(x_t,u_t) \sum_{s=0}^{t-1} r(x_s) \right] \\
&\stackrel{(a)}{=} \E \left[ \sum_{t=0}^{\infty} r(x_t) \sum_{s=t+1}^{\infty} \Dth \log \pi_\theta (u_s|x_s) J^\theta(x_s,u_s) \right] \\
&\stackrel{(b)}{=} \sum_{t=0}^{\infty} \E \left[ r(x_t) \sum_{s=t+1}^{\infty} \Dth \log \pi_\theta (u_s|x_s) J^\theta(x_s,u_s) \right] \\
&\stackrel{(c)}{=} \sum_{t=0}^{\infty} \sum_{x\in X} P(x_t = x ) r(x) \E \left[ \left. \sum_{s=1}^{\infty} \Dth \log \pi_\theta (u_s|x_s) J^\theta(x_s,u_s) \right| x_0 = x\right] \\
&\stackrel{(d)}{=} \sum_{t=0}^{\infty} \sum_{x\in X} P(x_t = x ) r(x) e(x)^{\top} \left( \sum_{t=1}^{\infty} P^t \right) \delta J \\
&\stackrel{(e)}{=} \sum_{t=0}^{\infty} \sum_{x\in X} P(x_t = x ) e(x)^{\top} R P(I-P)^{-1} \delta J \\
&\stackrel{(f)}{=} e(x_0)^{\top}(I-P)^{-1}R P(I-P)^{-1} \delta J,
\end{split}
\end{equation}
where $(a)$ is by a change of summation order, $(c)$ is by conditioning on $x_t$ and using the law of iterated expectation, and $(d-f)$ follow a similar derivation as \eqref{eq:PG_proof_4}. Equality $(b)$ is by the dominated convergence theorem, and for it to hold we need to verify that
\begin{equation*}
\E \left[ \sum_{t=0}^{\infty} \left| r(x_t) \sum_{s=t+1}^{\infty} \Dth \log \pi_\theta (u_s|x_s) J^\theta(x_s,u_s) \right| \right] < \infty.
\end{equation*}
Let $C$ such that $|r(x)|<C$ and $\left| \Dth \log \pi_\theta (u|x) J^\theta(x,u) \right| <C$ for all $x\in X,u\in U$. By Assumption \ref{assumption:grad_log_defined} such $C$ exists. Then
\begin{equation*}
\begin{split}
\E \left[ \sum_{t=0}^{\infty} \left| r(x_t) \sum_{s=t+1}^{\infty} \Dth \log \pi_\theta (u_s|x_s) J^\theta(x_s,u_s) \right| \right] &\leq \E \left[ \sum_{t=0}^{\tau} C \sum_{s=t+1}^{\tau} C \right] \\
&\leq \E \left[ C^2 \tau^2 \right] \\
&< \infty,
\end{split}
\end{equation*}
where the first inequality holds since $x^*$ is an absorbing state, and by definition $r(x^*) = 0$ and $J(x^*,u) = 0$ for all $u$. The last inequality is a well-known property of absorbing Markov chains \cite{kemeny_finite_1960}.

Finally, multiplying \eqref{eq:PG_proof_3} by $e(x_0)^{\top}$ and using \eqref{eq:PG_proof_4}, \eqref{eq:PG_proof_5}, and \eqref{eq:PG_proof_6} gives the stated result.
\end{proof}

%
%

\ifCLASSOPTIONcaptionsoff
  \newpage
\fi



\bibliographystyle{IEEEtran}
\bibliography{IEEEabrv,./VarAC}
\end{document}